\def\eqref#1{equation~\ref{#1}}
\def\1{\bm{1}}
\DeclareMathAlphabet{\mathsfit}{\encodingdefault}{\sfdefault}{m}{sl}
\SetMathAlphabet{\mathsfit}{bold}{\encodingdefault}{\sfdefault}{bx}{n}
\theoremstyle{definition}
\newtheorem{definition}{Definition}[section]
\newtheorem{theorem}{Theorem}
\theoremstyle{remark}
\newcommand{\norm}[1]{\left\lVert#1\right\rVert}
\pgfplotsset{compat=1.16}
\pgfplotsset{colormap/blackwhite}
\title{Learning Hyperbolic Representations of Topological Features}
\author{Panagiotis Kyriakis \\
% University of Southern California \\
\scriptsize{University of Southern California}\\
\scriptsize Los Angeles, USA \\
\texttt{pkyriaki@usc.edu}
\And
Iordanis Fostiropoulos \\
% University of Southern California \\
\scriptsize{University of Southern California}\\
\scriptsize Los Angeles, USA \\
\texttt{fostirop@usc.edu} \\
\And
Paul Bogdan \\
% University of Southern California \\
\scriptsize{University of Southern California}\\
\scriptsize Los Angeles, USA \\
\texttt{pbogdan@usc.edu}
}
\begin{document}

\maketitle

\begin{abstract}
Learning task-specific representations of persistence diagrams is an important problem in topological data analysis and machine learning. However, current methods are restricted in terms of their expressivity as they are focused on Euclidean representations. Persistence diagrams often contain features of infinite persistence (i.e., \textit{essential features}) and Euclidean spaces shrink their importance relative to non-essential features because they cannot assign infinite distance to finite points. To deal with this issue, we propose a method to learn representations of persistence diagrams on hyperbolic spaces, more specifically on the Poincare ball. By representing features of infinite persistence infinitesimally close to the boundary of the ball, their distance to non-essential features approaches infinity, thereby their relative importance is preserved. This is achieved without utilizing extremely high values for the learnable parameters, thus, the representation can be fed into downstream optimization methods and trained efficiently in an end-to-end fashion. We present experimental results on graph and image classification tasks and show that the performance of our method is on par with or exceeds the performance of other state of the art methods.
\end{abstract}

\section{Introduction}
% Topological data analysis has recently emerged in the machine learning community as a novel and potentially beneficial method to encode topological features (e.g connected components, loops, cavities) that are present in data in order to improve inference and prediction \cite{Carlsson08topologyand}. Topological data analysis has shown promising results in a wide range of problems such as periodicity quantification in signal analysis \cite{Perea2015}, object recognition \cite{6909654} and graph classification \cite{pmlr-v97-rieck19a} and across a wide range of fields such as material science \cite{Buchet2018} and neuroscience \cite{singh2008topological}. 
\textit{Persistent homology} is a topological data analysis tool which tracks how topological features (e.g. connected components, cycles, cavities) appear and disappear as we analyze the data at different scales or in nested sequences of subspaces \cite{EHPH1,EMPH2}. A nested sequence of subspaces is known as a \textit{filtration}. As an informal example of a filtration consider an image of variable brightness. As the brightness is increased, certain features (edges, texture) may become less or more prevalent. The \textit{birth} of a topological feature refers to the "time" (i.e., the brightness value) when it appears in the filtration and the \textit{death} refers to the "time" when it disappears. The lifespan of the feature is called \textit{persistence}. Persistent homology  summarizes these topological characteristics in a form of multiset called \textit{persistence diagram}, which is a highly robust and versatile descriptor of the data. Persistence diagrams enjoy the \textit{stability} property, which ensures that the diagrams of two similar objects are similar {\cite{Cohen-Steiner2007}}. Additionally, under some assumptions, one can approximately reconstruct the input space from a diagram (which is known as solving the inverse problem) {\cite{8205008}.} However, despite their strengths, the space of persistence diagrams lacks structure as basic operations, such as addition and scalar multiplication, are not well defined. The only imposed structure is induced by the Bottleneck and Wasserstein metrics, which are notoriously hard to compute, thereby preventing us from leveraging them for machine learning tasks.

\textbf{Related Work}. To address these issues, several vectorization methods have been proposed. Some of the earliest approaches are based on \textit{kernels}, i.e., generalized products that turn persistence diagrams into elements of a Hilbert space. Kusano et al. \cite{10.5555/3045390.3045602} propose a persistence weighted Gaussian kernel which allows them to explicitly control the effect of persistence. Alternatively, Carri\`{e}re et al. \cite{10.5555/3305381.3305450} leverage the sliced Wasserstein distance to define a kernel that mimics the distance between diagrams. The approaches by Bubenik \cite{JMLR:v16:bubenik15a} based on persistent landscapes, by Reininghaus et al. \cite{7299106} based on scale space theory and by Le et al. \cite{NIPS2018_8205} based on the Fisher information metric are along the same line of work. The major drawback in utilizing kernel methods is that they suffer from scalability issues as the training scales poorly with the number of samples. 

In another line of work, researchers have constructed \textit{finite-dimensional embeddings}, i.e., transformations turning persistence diagrams into vectors in a Euclidean space. Adams et al. \cite{JMLR:v18:16-337} map the diagrams to persistence images and discretize them to obtain the embedding vector. Carri\`{e}re et al. \cite{10.1145/2582112.2582128} develop a stable vectorization method by computing pairwise distances between points in the persistence diagram. An approach based on interpreting the points in the diagram as roots of a complex polynomial is presented by Di Fabio \cite{10.1007/978-3-319-23231-7_27}. Adcock et al. \cite{Adcock2013} identify an algebra of polynomials on the diagram space that can be used as coordinates and the approach is extended by Kališnik in \cite{Kalisnik2019} to tropical functions which guarantee stability. The common drawback of these embeddings is that the representation is pre-defined, i.e., there exist no learnable parameters, therefore, it is agnostic to the specific learning task. This is clearly sub-optimal as the eminent success of deep learning has demonstrated that it is preferable to learn the representation. 

The more recent approaches aim at learning the representation of the persistence diagram in an end-to-end fashion. 
% These approaches are broadly based on permutation-invariant input layers for learning tasks defined on unstructured sets of fixed cardinality, such as the \textit{Deep Sets} developed by Zaheer et al. \cite{NIPS2017_6931} and the \textit{PointNet} by Qi et al. \cite{8099499}. 
% Nonetheless, the space of persistence diagrams is equipped with a metric and the diagrams could be of varying cardinality, which renders the previous approaches not applicable. 
Hofer et al. \cite{10.5555/3294771.3294927} present the first input layer based on a parameterized family of Gaussian-like functionals, with the mean and variance learned during training. They extend their method in \cite{JMLR:v20:18-358} allowing for a broader class of parameterized function families to be considered. It is quite common to have topological features of infinite persistence \cite{EHPH1}, i.e., features that never die. Such features are called \textit{essential} and in practice are usually assigned a death time equal to the maximum filtration value. This may restrict their expressivity because it shrinks their importance relative to non-essential features. While we may be able to increase the scale sufficiently high and end up having only one trivial essential feature (i.e., the 0-th order persistent homology group that becomes a single connected component at a scale that is sufficiently large), the resulting persistence diagrams may not be the ones that best summarize the data in terms of performance on the underlying learning task. This is evident in the work by Hofer et al. {\cite{10.5555/3294771.3294927}} where the authors showed that essential features offer discriminative power. The work by Carri\`{e}re et al. \cite{DBLP:conf/aistats/CarriereCILRU20}, which introduces a network input layer the encompasses several vectorization methods, emphasizes the importance of essential features and is the first one to introduce a deep learning method incorporating extended persistence as a way to deal with them. 

In this paper, we approach the issue of essential features from the geometric viewpoint. We are motivated by the recent success of hyperbolic geometry and the interest in extending machine learning models to hyperbolic spaces or general manifolds. We refer the reader to the review paper by Bronstein et al. \cite{7974879} for an overview of geometric deep learning. Here, we review the most relevant and pivotal contributions in the field. Nickel et al. \cite{NIPS2017_7213, DBLP:conf/icml/NickelK18} propose Poincar\'{e} and Lorentz embeddings for learning hierarchical representations of symbolic data and show that the representational capacity and generalization ability outperform Euclidean embeddings. Sala et al. \cite{pmlr-v80-sala18a} propose low-dimensional hyperbolic embeddings of hierarchical data and show competitive performance on WorldNet. Ganea et al. \cite{NIPS2018_7780} generalize neural networks to the hyperbolic space and show that hyperbolic sentence embeddings outperform their Euclidean counterparts on a range of tasks. Gulcherhe et al. \cite{gulcehre2018hyperbolic} introduce hyperbolic attention networks which show improvements in terms of generalization on machine translation and graph learning while keeping a compact representation. In the context of graph representation learning, hyperbolic graph neural networks \cite{NIPS2019_9033} and hyperbolic graph convolutional neural networks \cite{NIPS2019_8733} have been developed and shown to lead to improvements on various benchmarks.  However, despite this success of geometric deep learning, little work has been done in applying these methods to topological features, such as persistence diagrams.

The \textbf{main contribution} of this paper is to bridge the gap between topological data analysis and hyperbolic representation learning. We introduce a method to represent persistence diagrams on a hyperbolic space, more specifically on the Poincare ball. We define a learnable parameterization of the Poincare ball and leverage the vectorial structure of the tangent space to combine (in a manifold-preserving manner) the representations of individual points of the persistence diagram. Our method learns better task-specific representations than the state of the art because it does not shrink the relative importance of essential features. In fact, by allowing the representations of essential features to get infinitesimally close to the boundary of the Poincare ball, their distance to the representations of non-essential features approaches infinity, therefore preserving their relative importance. To the best of our knowledge, this is the first approach for learning representations of persistence diagrams in non-Euclidean spaces.  
\section{Background}
In this section, we provide a brief overview of persistent homology leading up to the definition of persistence diagrams. We refer the interested reader to the papers by Edelsbrunner et al. \cite{EHPH1,EMPH2} for a detailed overview of persistent homology. An overview of homology can be found in the Appendix. 

\textbf{Persistent Homology}. Let $K$ be a simplicial complex. A \textit{filtration} of $K$ is a nested sequence of subcomplexes that starts with the empty complex and ends with $K$, 
\begin{equation}
    \emptyset = K_0 \subseteq K_1 \subseteq \dotso \subseteq K_d = K.
\end{equation}
A typical way to construct a filtration is to consider sublevel sets of a real valued function, $f:K\rightarrow \mathbb{R}$. Let $a_1 < \cdots <a_d$ be a sorted sequence of the values of $f(K)$. Then, we obtain a filtration by setting
\begin{equation}
    K_0 = \emptyset \quad\text{and}\quad 
    K_i = f^{-1}((-\infty,a_i]) \text{ for } 1\leq i\leq d.
    \label{eq:filtration}
\end{equation}
We can apply simplicial homology to each of the subcomplexes of the filtration. When \mbox{$0\leq i \leq j \leq d$}, the inclusion $K_i\subseteq K_j$ induces a homomorphism
\begin{equation}
f_n^{i,j}:H_n(K_i)\rightarrow H_n(K_j)
\end{equation}
on the simplicial homology groups for each homology dimension $n$. We call the image of $f_n^{i,j}$ a \textit{$n$-th persistent homology group} and it consists of homology classes born before $i$ that are still alive at $j$. A homology class $\alpha$ is \textit{born} at $K_i$ if it is not in the image of the map induced by the inclusion $K_{i-1}\subseteq K_i$. Furthermore, if $\alpha$ is born at $K_i$, it dies entering $K_j$ if the image of the map induced by $K_{i-1} \subseteq K_{j-1}$ does not contain the image of $\alpha$ but the image of the map induced by $K_{i-1} \subseteq K_j$ does. The \textit{persistence} of the homology class $\alpha$ is $j - i$. Since classes may be born at the same $i$ and die at the same $j$, we can use inclusion-exclusion to determine the multiplicity of each $(i,j)$,
\begin{equation}
    \mu_n^{i,j} = \beta_n^{i,j-1} - \beta_n^{i-1,j-1} - \beta_n^{i,j} + \beta_n^{i-1,j}, 
    \label{eq:mupij}
\end{equation}
where the \textit{$n$-th persistent Betti numbers} $\beta_n^{i,j}$ are the ranks of the images of the $n$-th persistent homology group, i.e., $\beta_n^{i,j} = rank(im(f_n^{i,j}))$, and capture the number of $n$-dimensional topological features that persist from $i$ to $j$. By setting $\mu_n^{i,\infty} = \beta_n^{i,d} - \beta_n^{i-1,d}$ we can account for features that still persist at the end of the filtration ($j=d$), which are known as \textit{essential features}. 

\textbf{Persistence Diagrams}. Persistence diagrams are multisets supported by the upper diagonal part of the real plane and capture the birth/death of topological features (i.e., homology classes) across the filtration.  
\begin{definition}[Persistence Diagram]
Let $\Delta = \{x\in\mathbb{R}_\Delta : mult(x)=\infty\}$ be the multiset of the diagonal $\mathbb{R}_\Delta = \{ (x_1,x_2)\in\mathbb{R}^2 : x_1=x_2\}$, where $mult(\cdot)$ denotes the multiplicity function and let $\mathbb{R}_*^2 = \{ (x_1,x_2)\in\mathbb{R}\cup(\mathbb{R}\cup\infty):x_2>x_1\}$. Also, let $n$ be a homology dimension and consider the sublevel set filtration induced by a function $f:K\rightarrow\mathbb{R}$ over the complex $K$. Then, a \textit{persistence diagram}, $\mathcal{D}_n(f)$, is a multiset of the form $\mathcal{D}_n(f)=\{x:x\in\mathbb{R}_*^2\}\cup\Delta$ constructed by inserting each point $(a_i,a_j)$ for $i<j$ with \mbox{multiplicity $\mu_n^{i,j}$} (or $\mu_n^{i,\infty}$ if it is an essential feature).  We denote the space of all persistence diagrams with $\mathbb{D}$.
\label{def:pd}
\end{definition}
\begin{definition}[Wasserstein distance and stability]
Let $\mathcal{D}_n(f)$, $\mathcal{E}_n(g)$ be two persistence diagrams generated by the filtration induced by the functions $f,g:K\rightarrow\mathbb{R}$, respectively. We define the \textit{Wasserstein distance}
\begin{equation}
    w_p^q(\mathcal{D}_n(f),\mathcal{E}_g(g)) = \inf_\eta \big( \sum_{x\in\mathcal{D}} \norm{x-\eta(x)}_q^p \big)^{1/p},
    \label{eq:wasser}
\end{equation}
where $p,q\in\mathbb{N}$ and the infimum is taken over all bijections $\eta:\mathcal{D}_n(f)\rightarrow\mathcal{E}_n(g)$. The special case $p=\infty$ is known as \textit{Bottleneck distance}. The persistence diagrams are stable  with respect to the Wasserstein distance if and only if $w_p^q(\mathcal{D}_n(f),\mathcal{E}_g(g)) \leq\norm{f-g}_\infty$. 
\end{definition}
Note that a bijection $\eta$ between persistence diagrams is guaranteed to exist because their cardinalities are equal, considering that, as per Def. \ref{def:pd}, the points on the diagonal are added with infinite multiplicity.
The strength of persistent homology stems from the above stability definition, which essentially states that the map taking a sublevel function to the persistence diagram is Lipschitz continuous. This implies that if two objects are similar then their persistence diagrams are close. 

\section{Persistent Poincare Representations}
\label{section:pp}
\begin{figure}
    \centering
    %\includegraphics[width=\textwidth]{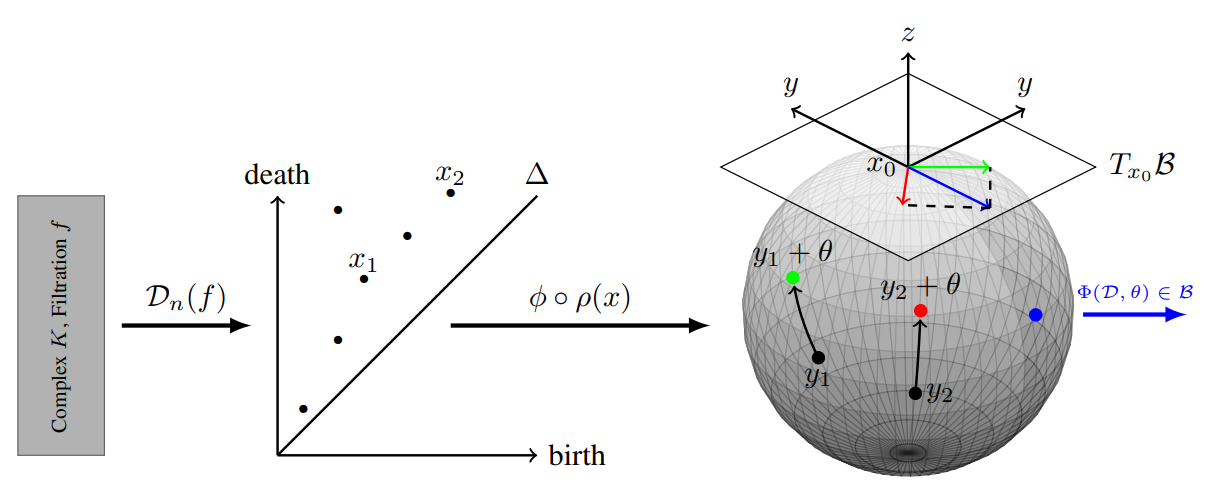}

    %\iffalse

    \begin{tikzpicture}
             \draw[black,fill=gray, opacity=0.6] (0,0) rectangle (1,3) node[pos=0.5, rotate=90, opacity=1]{\scriptsize Complex $K$, Filtration $f$};
             \draw[-latex,line width=0.5mm] (1.2,1.5)  -- (2.7,1.5) node [midway, above] {$\mathcal{D}_n(f)$};
             
             \def\xs{3}
             \draw[->, thick, draw=black] (\xs+0,0)--(\xs+3,0) node [right] {birth};
             \draw[->, thick, draw=black] (\xs+0,0)--(\xs+0,3) node [above] {death};
             \draw[thick, black] (\xs+0,0)--(\xs+3,3) node [above] {$\Delta$}; 
             \node[black] at (\xs+1,2) {\textbullet};
             \node[above] at(\xs+1,2) {$x_1$};
             \node[black] at (\xs+2,3) {\textbullet};
             \node[above] at (\xs+2,3) {$x_2$};
             \foreach \Point in {(\xs+0.7,1.3),(\xs+1.5,2.5), (\xs+0.7,2.8), (\xs+0.3,0.5)}{
                \node at \Point {\textbullet};
             }
             \draw[-latex,line width=0.5mm] (5,1.5)  -- (8,1.5) node [midway, above] {$\phi\circ\rho(x)$};
    \end{tikzpicture}
        \begin{tikzpicture}[tangent plane/.style args={at #1 with vectors #2 and #3}{%
        insert path={#1 --  ($#1+($#2-(0,0,0)$)$) --  ($#1+($#2-(0,0,0)$)+($#3-(0,0,0)$)$) 
        -- ($#1+($#3-(0,0,0)$)$) -- cycle}}]
             
            \begin{axis}[%
                    axis equal,
                    width=10cm,
                    height=10cm,
                    axis lines = none,
                    ticks=none,
                    enlargelimits=0.2,
                    view/h=45,
                    scale uniformly strategy=units only,
                ]
                \addplot3[%
                    opacity = 0.4,
                    surf,
                    colormap name=blackwhite,
                    %z buffer = sort,
                    samples = 30,
                    variable = \u,
                    variable y = \v,
                    domain = 0:180,
                    y domain = 0:360,
                ]({cos(u)*sin(v)}, {sin(u)*sin(v)}, {cos(v)});
                
                \addplot3+[->, thick, color=black, no markers,samples=50, samples y=0,
                     domain=232:257, variable=\v]({cos(0)*sin(v)}, {sin(0)*sin(v)}, {cos(v)});
                \addplot3[color=black, only marks, mark=*] coordinates {({cos(0)*sin(230)}, {sin(0)*sin(230)}, {cos(230)})} node[below] {$y_1$};
                \addplot3[color=green, only marks, mark=*] coordinates {({cos(0)*sin(260)}, {sin(0)*sin(260)}, {cos(260)})}  node[above,color=black] {$y_1+\theta$};
                
                \addplot3+[->, thick, color=black, no markers,samples=50, samples y=0, domain=212:237, variable=\v]({cos(-50)*sin(v)}, {sin(-50)*sin(v)}, {cos(v)});
                \addplot3[color=black, only marks, mark=*] coordinates {({cos(-50)*sin(210)}, {sin(-50)*sin(210)}, {cos(210)})} node[right] {$y_2$};
                \addplot3[color=red, only marks, mark=*] coordinates {({cos(-50)*sin(240)}, {sin(-50)*sin(240)}, {cos(240)})} node[above,color=black]  {$y_2+\theta$};
                
                \addplot3[color=blue, only marks, mark=*] coordinates {({cos(-100)*sin(250)}, {sin(-100)*sin(250)}, {cos(250)})};
                
                \draw[black,fill=white,fill opacity=0.4,tangent plane=at {(-0.8,-0.8,1)} with vectors {(1.6,0,0)} and {(0,1.6,0)}];
                \node[black] at (1,1,1) {$T_{x_0} \mathcal{B}$};
                \node[below, left] at (0,0,1) {$x_0$};
                \draw[->, thick, draw=black] (0,0,1)--(-1,0,1) node [above] {$y$};
                \draw[->, thick, draw=black] (0,0,1)--(0,1,1) node [above] {$y$};
                \draw[->, thick, draw=black] (0,0,1)--(0,0,1.8) node [above] {$z$};
                \draw[->, thick, draw=green] (0,0,1)--(0.35,0.35,1) node [right] {};
                \draw[->, thick, draw=red] (0,0,1)--(0.3,-0.35,1) node [right] {};
                \draw[->, thick, draw=blue] (0,0,1)--(0.7,0,1) node [right] {};
                \draw[dashed,thick] (0.7,0,1)--(0.3,-0.35,1) node [right] {};
                \draw[dashed,thick] (0.7,0,1)--(0.35,0.35,1) node [right] {};
            \end{axis}
            \draw[-latex,line width=0.5mm,blue]  (5.8,4.15) -- (7,4.15) node[midway, above] {\tiny $\Phi(\mathcal{D},\theta)\in\mathcal{B}$};

    \end{tikzpicture}
    %\fi
    
    \caption{\textbf{Illustration of our method}: Initially, the points are transferred via the auxiliary transformation $\rho$ and the parameterization $\phi$ to the Poincare ball $\mathcal{B}$, where learnable parameters $\theta$ are added. Then, the logarithmic map is used for transforming the points to the tangent space $T_{x_0}\mathcal{B}$. Finally, the resulting vectors are added and transformed back to the manifold via the exponential map. Note that the persistence diagram is mapped to a single point on the Poincare ball (i.e., $\Phi(\mathcal{D},\theta)\in\mathcal{B}$).}
    \label{fig:method}
\end{figure}
In this section, we introduce our method (Fig. \ref{fig:method}) for learning representations of persistence diagrams on the Poincare ball. We refer the reader to the Appendix for some fundamental concepts of differential geometry.

The Poincare ball is an $m$-dimensional manifold $(\mathcal{B},g_x^{\mathcal{B}})$, where \mbox{$\mathcal{B} = \{ x\in\mathbb{R}^m: \norm{x}<1\}$} is the open unit ball. The space in which the ball is embedded is called \textit{ambient space} and is assumed to be equal to $\mathbb{R}^m$. The Poincare ball is conformal (i.e., angle-preserving) to the Euclidean space but it does not preserve distances. The metric tensor and distance function are as follows
\begin{equation}
    g_x^\mathcal{B} = \lambda_x^2 g^E\quad
    \lambda_x = \frac{2}{1-\norm{x}^2}
    \quad d_\mathcal{B}(x,y) = \arccos{\Big(1+2\frac{\norm{x-y}^2}{(1-\norm{x}^2)(1-\norm{y})^2}\Big)},
    \label{eq:met_dist}
\end{equation}
where $g^E=\mathbb{I}_m$ is the Euclidean metric tensor. Eq. \ref{eq:met_dist} highlights the benefit of using the Poincare ball for representing persistence diagrams. Contrary to Euclidean spaces, distances in the Poincare ball can approach infinity for finite points. This space is ideal for representing essential features appearing in persistence diagrams without squashing their importance relative to non-essential features. Informally, this is achieved by allowing the representations of the former ones to get infinitesimally close to the boundary, thereby their distances to the later ones approach infinity. \mbox{Fig. \ref{fig:dgm}} provides an illustration.

We gradually construct our representation through a composition of 3 individual transformations. The \textbf{first step} is to transfer the points to the ambient space (i.e., $\mathbb{R}^m$) of the Poincare ball. Let $\mathcal{D}$\footnote{The sublevel set function $f$ and the homology dimension $n$ are omitted.} be a persistence diagram. We introduce the following \textit{auxiliary transformation}
\begin{equation}
    \rho : \mathbb{R}_*^2\rightarrow \mathbb{R}^m.
    \label{eq:embedding}
\end{equation}
This auxiliary transformation is essentially a high-dimensional embedding and may contain learnable parameters. Nonetheless, our main focus is to learn a hyperbolic representation and, therefore, we assume that $\rho$ is not learnable. Later in this section, we analyze conditions on $\rho$ to guarantee the stability and expressiveness of the hyperbolic representation.

The \textbf{second step} is to transform the embedded points from the ambient space to the Poincare ball. When referring to points on a manifold, it is important to define a coordinate system. A homeomorphism $\psi:\mathcal{B}\rightarrow\mathbb{R}^m$ is called \textit{coordinate chart} and gives the \textit{local coordinates} on the manifold. The inverse map $\phi:\mathbb{R}^m\rightarrow\mathcal{B}$, is called a \textit{parameterization} of $\mathcal{B}$ and gives the \textit{ambient coordinates}. The main idea is to inject learnable parameters into this parameterization. The injected parameters could be any form of differentiable functional that preserves the homomorphic property. Differentiability is needed such that our representation can be fed to downstream optimization methods. 
In our construction, we utilize a variant of the generalized spherical coordinates. Let $\theta \in\mathbb{R}^m$ be a vector of $m$ parameters. We define the \textit{learnable parameterization} $\phi:\mathbb{R}^m\times\mathbb{R}^m\rightarrow\mathcal{B}$ as follows
\begin{equation}
    y_1 = 1  + \frac{2}{\pi}\arctan{\theta_1r_1} \text{ and }
    y_i = \theta_i + \arccos{\frac{x_{i-1}}{r_{i-1}}}, \text{ for } i=2,3,..m,
    \label{eq:paramet}
\end{equation}
where $r_i^2 = x_m^2 + \cdots + x_{i+1}^2 +  x_{i}^2 + \epsilon$. The small positive constant $\epsilon$ is added to ensure that the denominator in Eq. \ref{eq:paramet} is not zero. Intuitively, Eq. \ref{eq:paramet} corresponds to scaling the radius of the point by a factor $\theta_1$ and rotating it by $\theta_i$ radians across the angular axes. The scaling and rotation parameters are learned during training. Note that the form of $y_1$ ensures that representation belongs in the unit ball for all values of $\theta_1$. The coordinate chart is not  explicitly used in our representation; it is provided in the Appendix for the sake of completeness.

The \textbf{third step} is to combine the representations of each individual point of the persistence diagram into a single point in the hyperbolic space. Typically, in Euclidean spaces, this is done by concatenating or adding the corresponding representations. However, in non-Euclidean spaces such operations are not manifold-preserving. Therefore, we transform the points from the manifold to the tangent space, combine the vectors via standard vectorial addition and transform the resulting vector back to the manifold. This approach is based on the \textit{exponential} and \textit{logarithmic} maps
\begin{equation}
\exp_{x} : T_{x}\mathcal{B}\rightarrow\mathcal{B}
\quad\text{and}\quad 
\log_{x} : \mathcal{B}\rightarrow T_x\mathcal{B}. 
\label{eq:explog}
\end{equation}
The exponential map allows us to transform a vector from the tangent space to the manifold and its inverse (i.e., the logarithmic map) from the manifold to the tangent space.  For a general manifold, it is hard to find these maps as we need to solve for the minimal geodesic curve (see Appendix for more details). Fortunately, for the Poincare ball case, they have analytical expressions, given as follows
\begin{equation}
    \exp_x(v) = x\oplus \bigg( \tanh{\Big(\frac{\lambda_x\norm{v}}{2}}\Big)\frac{v}{\norm{v}} \bigg),
    \log_x(y) = \frac{2}{\lambda_x}\tanh^{-1}{\norm{-x\oplus y}\frac{-x\oplus y}{\norm{-x\oplus y}}},
    \label{eq:maps}
\end{equation}
where $\oplus$ denotes the M\"obius addition, which is a manifold-preserving operator (i.e., for any \mbox{$x,y\in\mathcal{B}\implies x\oplus y \in \mathcal{B}$}). The analytical expression is given in the Appendix. The transformations given by these maps are norm-preserving, i.e., for example, the geodesic distance from $x$ to the transformed point $\exp_{x}(v)$ coincides with the metric norm $\norm{v}_g$ induced by the metric tensor $g^\mathcal{B}_x$. This is an important property as we need the distance between points (and therefore the relative importance of topological features) to be preserved when transforming to and from the tangent space. We now combine the aforementioned transformations and define the Poincare hyperbolic representation followed by its stability theorem.
\begin{definition}[Poincare Representation]
Let $\mathcal{D}\in\mathbb{D}$ be the persistence diagram to be represented in an $m$-dimensional Poincare ball $(\mathcal{B},g_x^{\mathcal{B}})$ embedded in $\mathbb{R}^m$ and $x_0\in\mathcal{B}$ be a given point. The representation of $\mathcal{D}$ on the manifold $\mathcal{B}$ is defined as follows
\begin{equation}
    \Phi : \mathbb{D}\times\mathbb{R}^m \rightarrow \mathcal{B}, \quad \Phi(\mathcal{D},\theta) = \exp_{x_0}\Big( \sum_{x\in\mathcal{D}} \log_{x_0}\big(\phi(\rho(x))\big) \Big),
    \label{eq:rep}
\end{equation}
where the exponential and logarithmic maps are given by Eq. \ref{eq:maps} and the learnable parameterization and the auxiliary transformation by Eq. \ref{eq:paramet} and Eq. \ref{eq:embedding}, respectively. 
\end{definition}
\begin{theorem}[Stability of Hyperbolic Representation]
Let $\mathcal{D},\mathcal{E}$ be two persistence diagrams and consider an auxiliary transformation $\rho : \mathbb{R}_*^2\rightarrow \mathbb{R}^m$ that is
\begin{itemize}
    \item Lipschitz continuous w.r.t the induced metric norm $\norm{\cdot}_g$,
    \item $\rho(x)=0$ for all $x\in\mathbb{R}_\Delta$.
\end{itemize}
Additionally, assume that $x_0=0$. Then, the hyperbolic representation given by Eq. \ref{eq:rep} is stable w.r.t the Wasserstein distance when $p=1$, i.e., there exists constant $K>0$ such that 
\begin{equation}
    d_\mathcal{B}( \Phi(\mathcal{D},\theta) , \Phi(\mathcal{E},\theta)) \leq K w_1^g(\mathcal{D},\mathcal{E})
\end{equation}
where $d_\mathcal{B}$ is the geodesic distance and $w_1^g$ is the Wasserstein metric with the $q$-norm replaced by the induced norm $\norm{\cdot}_g$ (i.e., the norm induced by the metric tensor $g$, see Appendix \ref{appen:rm}). 
\label{them:1}
\end{theorem}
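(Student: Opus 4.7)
The plan is to reduce the stability bound to a Lipschitz estimate on the tangent space at the origin. Setting $x_0=0$ simplifies the maps in Eq.~\ref{eq:maps}: one has $\lambda_0=2$, $\log_0(y)=\tanh^{-1}(\norm{y})\,y/\norm{y}$, $\exp_0(v)=\tanh(\norm{v})\,v/\norm{v}$, and $d_\mathcal{B}(0,y)=\norm{\log_0(y)}_g$. Writing $\bar\rho=\phi\circ\rho$ and $v_\mathcal{D}=\sum_{x\in\mathcal{D}}\log_0(\bar\rho(x))$, so that $\Phi(\mathcal{D},\theta)=\exp_0(v_\mathcal{D})$ by Eq.~\ref{eq:rep}, the goal becomes to control $d_\mathcal{B}(\exp_0(v_\mathcal{D}),\exp_0(v_\mathcal{E}))$ in terms of $w_1^g(\mathcal{D},\mathcal{E})$.

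The first step is to observe that the hypothesis $\rho(x)=0$ for $x\in\mathbb{R}_\Delta$, combined with $\phi(0)=x_0=0$ (readable off Eq.~\ref{eq:paramet} up to the normalization of $y_1$), forces diagonal points to contribute zero to $v_\mathcal{D}$; this both makes the sum well-defined against the infinite-multiplicity diagonal and allows me to invoke an arbitrary bijection $\eta:\mathcal{D}\to\mathcal{E}$ without worrying about unmatched diagonal copies. Given such $\eta$ I would then write
\begin{equation*}
v_\mathcal{D}-v_\mathcal{E}=\sum_{x\in\mathcal{D}}\bigl(\log_0(\bar\rho(x))-\log_0(\bar\rho(\eta(x)))\bigr)
\end{equation*}
and apply the triangle inequality in the metric norm $\norm{\cdot}_g$ at the origin to bound this by the sum of pointwise differences. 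The composition $\log_0\circ\phi\circ\rho:\mathbb{R}_*^2\to T_0\mathcal{B}$ is Lipschitz: $\rho$ is Lipschitz by hypothesis, $\phi$ is differentiable (hence locally Lipschitz), and $\log_0$ is smooth on $\mathcal{B}$. Denoting its Lipschitz constant by $L$, one obtains $\norm{v_\mathcal{D}-v_\mathcal{E}}_g\leq L\sum_x\norm{x-\eta(x)}_g$, and taking the infimum over $\eta$ yields $\norm{v_\mathcal{D}-v_\mathcal{E}}_g\leq L\,w_1^g(\mathcal{D},\mathcal{E})$.

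The main obstacle is then passing from this tangent-space estimate back to the geodesic distance. In a Hadamard manifold the exponential is distance-non-decreasing (Rauch comparison), so $v\mapsto\exp_0(v)$ is \emph{not} globally Lipschitz on all of $T_0\mathcal{B}$ and a universal $K$ does not exist without localization. The fix I would use is to exploit that, for fixed $\theta$ and a given range of $\rho$, both $v_\mathcal{D}$ and $v_\mathcal{E}$ lie in a bounded ball $B_R\subset T_0\mathcal{B}$; on this ball the smooth map $\exp_0$ is globally Lipschitz with some constant $C_R$ by the mean value theorem applied to its Jacobian. An equivalent route is to work directly with the closed form $d_\mathcal{B}(y_1,y_2)=2\tanh^{-1}(\norm{-y_1\oplus y_2})$ and bound the M\"obius expression in terms of $v_\mathcal{D}-v_\mathcal{E}$. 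Either way, combining gives $d_\mathcal{B}(\Phi(\mathcal{D},\theta),\Phi(\mathcal{E},\theta))\leq C_R\norm{v_\mathcal{D}-v_\mathcal{E}}_g\leq C_R L\,w_1^g(\mathcal{D},\mathcal{E})$, proving the theorem with $K=C_R L$.
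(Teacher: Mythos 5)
Your argument follows the same skeleton as the paper's proof: take the bijection $\eta$ realizing the Wasserstein infimum, use $\rho|_{\mathbb{R}_\Delta}=0$ together with $x_0=0$ so that diagonal points contribute nothing to the sum, apply the triangle inequality in $T_0\mathcal{B}$, and push a chain of Lipschitz estimates through $\log_0\circ\phi\circ\rho$ down to $\sum_x\norm{x-\eta(x)}_g$; your single constant $L$ is just the product $K_lK_\phi K_\rho$ appearing in the paper. The one place you genuinely diverge is the final step. The paper simply asserts that $\exp_{x_0}$ is Lipschitz with some constant $K_e$ ``from smoothness,'' which, as you correctly observe, is false globally: on the Poincar\'e ball $\exp_0$ is distance-non-decreasing with unbounded expansion factor, so no global $K_e$ exists. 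Your localization to a ball $B_R\subset T_0\mathcal{B}$ is the honest repair, and is more careful than what the paper does. It does not, however, fully close the gap: $v_\mathcal{D}=\sum_{x\in\mathcal{D}}\log_0(\phi(\rho(x)))$ is a sum over an unbounded number of off-diagonal points, so no single $R$ works uniformly over all diagrams, and the resulting $K=C_RL$ depends on $\mathcal{D},\mathcal{E}$ (or on an a priori bound on their cardinality and spread). Under the usual reading of a stability theorem, with $K$ universal, both your argument and the paper's leave this step incomplete --- yours at least names the obstruction and makes explicit what additional hypothesis (a uniform bound on $\norm{v_\mathcal{D}}_g$) would be needed. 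Everything preceding that step is correct and matches the paper.
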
 
The proof of Theorem \ref{them:1} (given in the Appendix) results from a general stability theorem \cite{Cohen-Steiner2007} and is on par with similar results for other vectorizations \cite{JMLR:v18:16-337} or representations \cite{10.5555/3294771.3294927} of persistence diagrams. One subtle difference is that Theorem \ref{them:1} uses the induced norm rather than the $q$-norm appearing in the Wasserstein distance. However, since the induced norm implicitly depends on the chosen point $x_0$, which, per requirements of Theorem \ref{them:1}, is assumed to be equal to the origin, there is no substantial difference. The fact that we require the auxiliary transformation $\rho$ to be zero on the diagonal is important to theoretically guarantee stability. Intuitively, this can be understood by recalling (Def. \ref{def:pd}) that all (infinite) points on the diagonal are included in the persistence diagram. By mapping the diagonal to zero and taking $x_0=0$, we ensure that the summation in Eq. \ref{eq:rep} collapses to zero when summing over the diagonal.

Finally, we note that the assumptions of Theorem {\ref{them:1}} are not restrictive. In fact, we can easily find Lipschitz continuous transformations that are zero on the diagonal $\mathbb{R}_\Delta$, such as the exponential and rational transformations proposed by Hofer et al. {\cite{10.5555/3294771.3294927}}. Additionally, we note that the assumptions of Theorem {\ref{them:1}} do not prohibit us from choosing an "under-powered" or degenerate $\rho$. For example, $\rho=0$ satisfies the assumptions and therefore leads to a stable representation. However, such representation is obviously not useful for learning tasks. An implicit requirement, that guarantees not only the stability but the expressiveness of the results representation, is that $\rho$ does not result in any information loss. This requirement is satisfied by picking a $\rho$ that it is injective, which, given that it is a higher dimensional embedding, is a condition easy to satisfy. In practice, we use a variant of the exponential transformation by Hofer et al. {\cite{10.5555/3294771.3294927}}. The exact expression is given in the Appendix.

\section{Experiments}
\begin{figure}

%    \includegraphics[width=\textwidth]{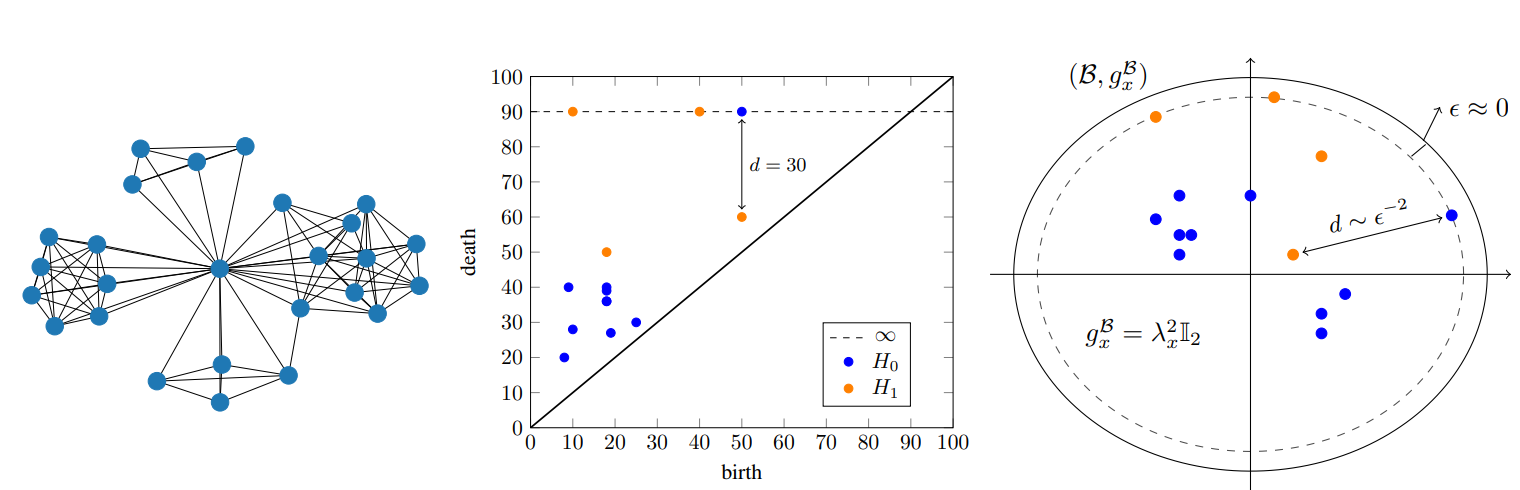}

%\iffalse
\begin{minipage}{0.29\textwidth}
\resizebox{\textwidth}{!}{%
    \includegraphics{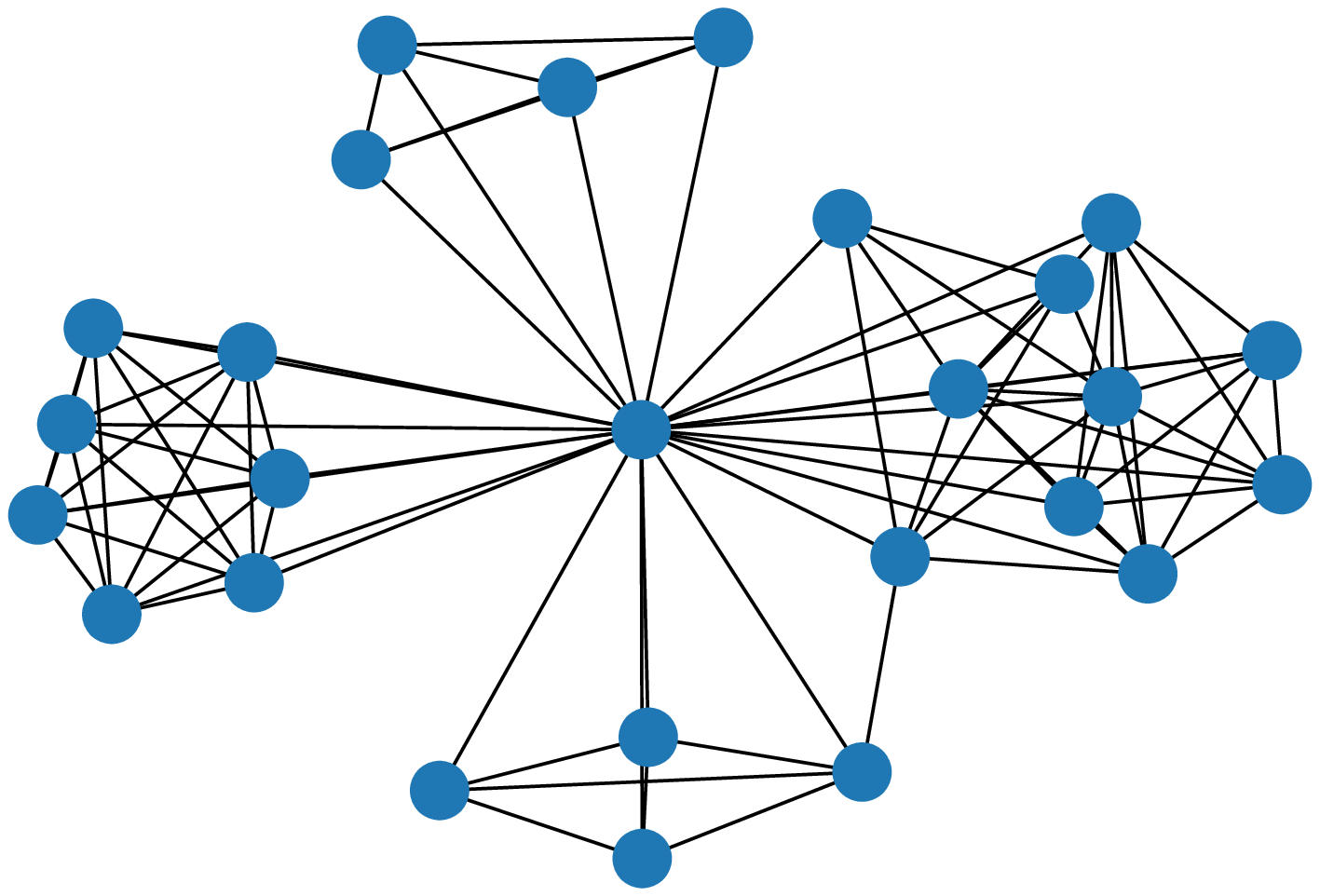}
}
\end{minipage}
\begin{minipage}{0.35\textwidth}
\resizebox{\textwidth}{!}{%
    \begin{tikzpicture}
    \begin{axis}[
        xmin=0,xmax=100,
        ymin=0,ymax=100,
        xlabel=birth,
        ylabel=death,
        xtick distance=10, ytick distance = 10,
        legend style = {at={(0.9,0.3)}},
        legend entries = {$\infty$, $H_0$, $H_1$}
    ]
    \addlegendimage{dashed}
    \addlegendimage{only marks, mark=*, blue}
    \addlegendimage{only marks, mark=*, orange}
    \addplot[thick, domain=0:100, samples=100]{x};
    \addplot[dashed, domain=0:100, samples=100]{90};
    \addplot[color=blue,mark=*, only marks] coordinates {
        (25,30)
        (9,40)
        (19,27)
        (8,20)
        (18, 36)
       (18, 39)
       (18, 40)
       (18, 36)
       (10, 28)
       (50,90)
    };
    \addplot[color=orange,mark=*, only marks] coordinates {
        (50,60)
        (18, 50)
        (10,90)
        (40,90)
    };
    \node (f) at (axis cs:50,60) {};
    \node (s) at (axis cs:50,90) {};
    \draw[<->] (f) -- (s) node[midway, right] {\small $d=30$};
    \end{axis}
    \end{tikzpicture}
}%
\end{minipage}
\begin{minipage}{0.35\textwidth}
\resizebox{\textwidth}{!}{%
    \begin{tikzpicture}
        \begin{axis}[ 
            ticks=none,
            trig format plots=rad,
            axis lines = middle,
            axis line style={->},
            xmin=-1.1,xmax=1.1,
            ymin=-1.1,ymax=1.1,
        ]
        \addplot [domain=0:2*pi, samples=200, black] ({cos(x)}, {sin(x)});
        \addplot [dashed,domain=0:2*pi, samples=200, black, opacity=0.7] ({0.9*cos(x)}, {0.9*sin(x)});
        \node () at (axis cs: -0.6,1.01) {$(\mathcal{B},g_x^\mathcal{B})$};
        \node () at (axis cs: -0.45,-0.3) { $g_x^\mathcal{B} = \lambda_x^2\mathbb{I}_2 $};
    \addplot[color=blue,mark=*, only marks] coordinates {
        (-0.25,0.20)
        (0.4,-0.1)
        (0.3,-0.2)
        (0.3,-0.30)
        (0.85, 0.3)
       (-0.3, 0.2)
       (-0.3, 0.40)
       (-0.3, 0.1)
       (-0.4, 0.28)
       (0,0.4)
    };
    \addplot[color=orange,mark=*, only marks] coordinates {
        (0.3,0.60)
        (0.18, 0.10)
        (0.10,0.90)
        (-0.40,0.8)
    };
    \node (s) at (axis cs:0.85,0.3) {};
    \node (f) at (axis cs:0.18,0.10) {};
    \draw[<->] (f) -- (s) node[midway, above, rotate=10] {\small $d\sim \epsilon^{-2}$};
    \draw (axis cs:0.68,0.6) -- (axis cs:0.74,0.66) node[midway] (mn) {};
    \draw[->] (mn) --  (0.80,0.85) node[right] {$\epsilon \approx 0$};
        \end{axis}

    \end{tikzpicture}
}%
\end{minipage}
%\fi
\caption{\textit{Left}: Example graph from the IMDB-BINARY dataset. \textit{Middle}: Persistence diagrams extracted using the Vietoris-Rips filtration. The dashed line denotes features of infinite persistence, which are represented by points of maximal death value equal to $90$ (i.e., by points of finite persistence). \textit{Right}: Equivalent representation on the 2-dimensional Poincare ball. Features of infinite persistence are mapped infinitesimally close to the boundary. Therefore, their distance to finite persistence features approaches infinity ($d\sim \epsilon^{-2}$). }
\label{fig:dgm}
\end{figure}
We present experiments on diverse datasets focusing on persistence diagrams extracted from graphs and grey-scale images. The learning task is classification. Our representation acts as an input to a neural network and the parameters are learned end-to-end via standard gradient methods. The architecture as well as other training details are discussed in the Appendix. The code to reproduce our experiments is publicly available at \url{https://github.com/pkyriakis/permanifold/}. 

\textbf{Ablation Study}: To highlight to what extent our results are driven by the hyperbolic embedding, we perform an ablation study. In more detail, we consider three variants of our method:
\begin{enumerate}
    \item Persistent Poincare (P-Poinc): This is the original method as presented in Sec. {\ref{section:pp}},
    \item Persistent Hybrid (P-Hybrid): Same as P-Poinc with the Poincare ball replaced by the Euclidean space. This implies that the exponential and logarithmic maps (Eq. {\ref{eq:maps}}) reduce to the identity maps, i.e., $\exp_x(v) = x+v$ $\log_x(y)=y-x$.  The learnable parameterization is as in Eq. \ref{eq:paramet}. 
    \item Persistent Euclidean (P-Eucl): Same as P-Hybrid with  Eq. {\ref{eq:paramet}} replaced with simple addition of the learnable parameters, i.e., $y=x+\theta$.
\end{enumerate}

\textbf{Baseline - Essential Features Separation}: To highlight the benefit of a unified Poincare representation, we design a natural baseline that treats essential and non-essential features separately. In more detail, for each point $(b,d)\in\mathcal{D}$, we calculate its persistence $d-b$ and then compute the histogram of the resulting persistence values. For essential features, we compute the histogram of their birth times. Then, we concatenate those histograms and feed them as input to the neural network (architecture described in the Appendix). We consider the case where the essential features are included (\textit{baseline w/ essential}) and the case where they are discarded (\textit{baseline w/o essential}).

\textbf{Manifold Dimension and Projection Bases}: Since our method essentially represents each persistence diagram on a $m-$dimensional Poincare ball, it may introduce substantial information compression when the points in the diagrams are not of the same order as $m$. A trivial approach to counteract this issue is to use a high value for $m$. However, we experimentally observed that a high manifold dimension does not give the optimal classification performance and it adds a computational overhead in the construction of the computation graph. Empirically, the best approach is to keep $m$ at moderate values (in the range $m=3$ to $m=12$), replicate the representation $K$ times and concatenate the outputs. Each replica is called a \textit{projection base} and for their number we explored values dependant on the number of points in the persistence diagram. Persistence diagrams obtained from images tend to have substantially fewer points than diagrams obtained from graphs. Therefore, for images, we explored moderate values for $K$, i.e., $5-10$, whereas for graphs we increased $K$ in the range $200-500$. Essentially, we treat both $m$ and $K$ as hyper-parameters, explore their space following the aforementioned empirical rules and pick the optimal via the validation dataset. As we increase $m$, it is usually prudent to decrease $K$ to maintain similar model capacity.

\subsection{Graph Classification}
In this experiment, we consider the problem of graph classification. We evaluate our approach using social graphs from \cite{10.1145/2783258.2783417}.  The \textit{REDDIT-BINARY} dataset contains 1000 samples and the graphs correspond to online discussion threads. The task is to identify to which community a given graph belongs. The \textit{REDDIT-5K} and the \textit{REDDIT-12K} are a larger variant of the former dataset and contain 5K and $\sim$12K graphs from 5 and 11 subreddits, respectively. The task is to predict to which subreddit a given discussion graph belongs. The IMDB-BINARY contains 1000 ego-networks of actors that have appeared together in any movie and the task is to identify the genre (\textit{action} or \textit{romance}) to which an ego-graph belongs. Finally, the IMDB-MULTI contains 1500 ego-networks belonging to 3 genres (\textit{action}, \textit{romance}, \textit{sci-fi}). We train our model using 10-fold cross-validation with a $80$/$20$ split.

Graphs are special cases of simplicial complexes, therefore, defining filtrations is straightforward. We use two methods: The first one captures global topological properties and is based on shortest paths. In this case, the graph $\mathcal{G}=(V,E)$ is lifted to a metric space $(V,d)$ using the shortest path distance $d:V\times V\rightarrow\mathbb{R}_{\geq 0}$ between two vertices, which can be easily proved to be a valid metric. Then, we define the Vietoris-Rips complex of $\mathcal{G}$ as the filtered complex $VR_s(\mathcal{G})$ that contains a subset of $V$ as a simplex if all pairwise distances in that subset are less than or equal to $s$, or, formally,
\begin{equation}
    VR_s(\mathcal{G}) = \{ (v_0,v_1,\cdots, v_m) : d(v_i,v_j)\leq s, \forall i,j\}.
    \label{eq:vrfiltration}
\end{equation}
This approach essentially interprets the vertices of the graph as a point cloud in a metric space, with the distance between points given by the shortest path between the corresponding vertices. In the case of unweighted graph, we assign unit weight across edges. In Fig. \ref{fig:dgm} we show a sample persistence diagram extracted with the Vietoris-Rips filtration and the corresponding representation of the features on the Poincare ball. The second method captures local topological properties and is based on vertex degree. Given a graph $\mathcal{G} = (V,E)$, a simplicial complex can be defined as the union of the vertex and edge sets, i.e., $K = K_0\cup K_1$, where $K_0=\{ v:v\in V\}$ and $K_1=\{ (u,v):(u,v)\in E\}$. The sublevel function is defined as follows: $f(v) = deg(v)$ for $v\in K_0$ and $f(u,v)=\max{\{f(u),f(v)\}}$ for $(u,v)\in K_1$, where $deg(v)$ is the vertex degree $v$. Then, the filtration is given by Eq. \ref{eq:filtration}.  
\begin{table}[t!]
\centering
\captionof{table}{Classification accuracy (mean$\pm$std or min-max range, if available).}
\begin{tabular}{||c|c|c|c|c|c||}
    \hline
           & IMDB-M & IMDB-B & REDDIT-B & REDDIT-5K &  REDDIT-12K \\ \hline\hline
       \textit{baseline w/o ess.} & $38.43_{\pm 0.98}$ & $65.78_{\pm 1.25}$ & $67.33_{\pm 1.55}$ & $39.30_{\pm 1.12}$ & $28.35_{\pm 1.11}$\\ 
       \textit{baseline w/ ess.} & $38.78_{\pm 0.85}$ & $66.56_{\pm 1.13}$ & $69.33_{\pm 0.75}$ & $38.45_{\pm 0.98}$ & $30.43_{\pm 0.75}$\\ \hline\hline

        WL & $49.33_{\pm 4.75}$ & $73.40_{\pm 4.63}$ & $81.10_{\pm 1.90}$ & $49.44_{\pm 2.36}$ & $38.18_{\pm 1.31}$ \\ 
        % GK & $43.89_{\pm 0.38}$ & $65.87_{\pm 0.98}$ & $65.87_{\pm 0.98}$ & $41.01_{\pm 0.17}$ & $31.82_{\pm 0.08}$ \\
       DGK & $44.5_{\pm 0.52}$ & $66.96_{\pm 0.56}$ & $78.04_{\pm 0.39}$ & $41.27_{\pm 0.18}$ & $32.22_{\pm 0.10}$\\
      PSCN & $45.23_{\pm 2.84}$ & $71.00_{\pm 2.29}$ & $86.30_{\pm 1.58}$ & $49.10_{\pm 0.70}$ & $41.32_{\pm 0.32}$\\
      WKPI & $49.5_{\pm 0.40}$ & $75.10_{\pm 1.10}$ & \textit{n/a} & $59.50_{\pm 0.60}$ & $48.40_{\pm 0.50}$\\
       AWE & $51.54_{\pm 3.61}$ & $74.45_{\pm 5.83}$ & $87.89_{\pm 2.53}$ & $54.74_{\pm 1.91}$ & $39.20_{\pm 2.0}$\\ 
    GIN-GFL & $49.70_{\pm 2.9}$ & $74.50_{\pm 4.60}$ & $90.30_{\pm 2.60}$ & $55.70_{\pm 2.90}$ & \textit{n/a}\\ 

       PersLay & 48.8-52.2 & 71.2-72.6 & \textit{n/a} & 55.6-56-5 & 47.7-49.1\\ 
       GLR & \textit{n/a} & \textit{n/a} & \textit{n/a} & 54.5 & 44.5 \\ \hline\hline

       P-Eucl &  $46.45_{\pm 4.03}$ & $67.54_{\pm 3.54}$ & $71.45_{\pm 2.98}$ & $43.15_{\pm 3.12}$ & $32.56_{\pm 3.68}$\\
       P-Hybrid &  $47.87_{\pm 2.03}$ & $72.48_{\pm 4.57}$ & $72.87_{\pm 1.75}$ & $46.85_{\pm 2.17}$ & $36.42_{\pm 4.08}$\\

       P-Poinc &  $57.31_{\pm 4.27}$ & $81.86_{\pm 4.26}$ & $79.78_{\pm 3.21}$ & $51.71_{\pm  3.01}$ & $42.16_{\pm 3.45}$ \\ \hline
\end{tabular}
%\caption{Classification accuracy for our method compared against others}
\label{table1}
\end{table}

We compare our method against several state of the art methods for graph classification. In particular, we compare against : (1) the Weisfeiler-Lehman (WL) graph kernel \cite{JMLR:v12:shervashidze11a}, (2) the Deep Graph Kernel (DGK) \cite{10.1145/2783258.2783417}, (3) the Patchy-SAN (PSCN) \cite{10.5555/3045390.3045603}, the Weighted Persistence Image Kernel (WKPI) \cite{DBLP:conf/nips/ZhaoW19}, (5) the Anonymous Walk Embeddings (AWE) \cite{pmlr-v80-ivanov18a}, (6) the Graph Isomorphism Network with Graph Filitraton Learning (GIN-GFL) \cite{DBLP:journals/corr/abs-1905-10996}. Additionally, we compare against the PersLay input layer by Carri\`{e}re et al. \cite{DBLP:conf/aistats/CarriereCILRU20} which utilizes extended persistence as an alternative way to deal with essential features and the Learnable Representation based on Gaussian-like structure elements (GLR) presented by Hofer et al. \cite{10.5555/3294771.3294927}. 

We run simulations for different manifold dimensions (ranging from $m=2$ to $m=12$) and pick the best one using the mean (across all 10 folds) cross-validation accuracy as criterion. We report the mean and standard deviation for the best $m$ on Table {\ref{table1}}. We observe that the performance of the P-Eucl method is poor but the addition of the learnable parameterization (i.e., P-Hybrid method) leads to small but consistent improvement. The best performance is achieved by the fully hyperbolic representation (P-Poinc) and is on par or exceeds the performance of the state of the art. This suggests that the hyperbolic representation is vital for obtaining good performance. Additionally, the baselines have poor performance irrespectively of whether or not essential features have been included via the histogram of their birth times. This indicates that treating essential features separately and including them as additional inputs is not sufficient for good performance. These results support our initial motivation for representing persistence diagrams on the Poincare ball. Also, observe that our method out-performs all the competitor methods for the IMDB datasets, which have a small number of nodes and edges (approx. 10-20 nodes and 60-90 edges) while the REDDIT ones are bigger networks (approx. 400-500 nodes and 500-600 edges). This may indicate that the Poincare representation is best suited for smaller, less complex graphs.

\subsection{Image Classification} 
% \begin{wrapfigure}[17]{r}{0.40\textwidth}
% \vspace{-0.2cm}
% \includegraphics[width=0.40\textwidth]{mnist_height_filt_sample.eps}
% \caption{MNIST (\textit{up}) and Fashion-MNIST (\textit{bottom}) images filtered along directions $(1,0)$ (\textit{left}) and $(0,1)$ (\textit{right}).}
% \label{fig:height_fil}
% \end{wrapfigure}
Even though it is well known that convolutional neural networks have achieved unprecedented success as feature extractors for images, we present an image classification case-study using topological features as a proof-of-concept for our method. Contrary to graphs, images are not inherently equipped with the structure of a simplicial complex. In theory, we could construct Vietoris-Rips complexes, as in Eq. \ref{eq:vrfiltration}, by interpreting pixels as a point cloud. However, this is not the most natural representation of an image, which has a grid structure. Therefore, we exploit this structure by constructing \textit{cubical complexes}, i.e., unions of cubes aligned on a 2D grid. As in the case of graphs, we use two methods. For the first method, called \textit{cubical filtration}, we use the grey-scale image directly and represent each pixel as a $2-$cube. Then, all of its faces are added to the $K$-th complex.  We obtain a sublevel function by extending the grey-scale value $I(v)$ of pixel a $v$ to all cubes in $K$ as follows:
\begin{equation}
    f(\sigma) = \min_{\sigma \text{ face of } \tau} I(\tau), \sigma\in K.
\end{equation}
In the second experiment, we consider the problem of image classification. We utilize two standardized datasets: the MNIST, which contains images of handwritten digits, and the Fashion-MNIST, which contains shape images of different types of garment (e.g., T-shirt, trouser, etc.). Each dataset contains a total of 70K (60K train, 10K validation, 10 folds) grey-scale images of size $28\times 28$. Both datasets are balanced and categorized into 10 classes. Finally, a grey-scale filtration is obtained using Eq. \ref{eq:filtration}. The second method is called \textit{height filtration} and uses the binarized version of the original image. We define the height filtration by choosing a direction $v\in\mathbb{R}^2$ of unit norm and by assigning to each pixel $p$ of value $1$ in the binarized image a new value equal to $\langle p, v\rangle$, i.e., the distance of pixel $p$ to the plane defined by $v$. This creates a new grey-scale image which is then fed to the aforementioned cubical filtration. We note that the height filtration deserves special attention in persistent homology because complexes to which it is applied can be approximately reconstructed from their persistence diagrams \cite{8205008}. In practice, we use direction vectors $v$ that are distributed uniformly across the unit cycle. We choose $30$ and $50$ directions for the MNIST and Fashion-MNIST, respectively.
\begin{figure}[t!]
    \begin{minipage}{0.32\textwidth}
        \resizebox{\textwidth}{!}{%
            \begin{tikzpicture}
            \begin{axis}[
                xmin=0,xmax=9,
                ymin=0,ymax=.7,
                xlabel=epoch,
                ylabel=train loss,
                xtick distance=1, ytick distance = 0.1,
                legend style = {at={(0.9,0.5)}},
                legend entries = {$m=3$, $m=9$}
            ]
            \addplot coordinates {(0,0.6581056118011475)
                                    (1,0.34190821647644043)
                                    (2,0.22851133346557617)
                                    (3,0.1752122938632965)
                                    (4,0.14490975439548492)
                                    (5,0.12544816732406616)
                                    (6,0.11004195362329483)
                                    (7,0.09624423086643219)
                                    (8,0.08735961467027664)
                                    (9,0.07892270386219025)
                                    };
            \addplot coordinates {(0,0.5376121997833252)
                                    (1,0.2191617488861084)
                                    (2,0.16353870928287506)
                                    (3,0.13224060833454132)
                                    (4,0.11199545860290527)
                                    (5,0.09773093461990356)
                                    (6,0.08535728603601456)
                                    (7,0.07794970273971558)
                                    (8,0.07035037130117416)
                                    (9,0.06162204220890999)
                        };
            \end{axis}
            \end{tikzpicture}
        }
    \end{minipage}
    \begin{minipage}{0.32\textwidth}
        \resizebox{\textwidth}{!}{%
            \begin{tikzpicture}
            \begin{axis}[
                xmin=0,xmax=9,
                ymin=0.8,ymax=1,
                xlabel=epoch,
                ylabel=validation accuracy,
                xtick distance=1, ytick distance = 0.02,
                legend style = {at={(0.9,0.5)}},
                legend entries = {$m=3$, $m=9$}
            ]
            \addplot coordinates {(0,0.8324000239372253)
                                    (1,0.9043999910354614)
                                    (2,0.9240000247955322)
                                    (3,0.9320999979972839)
                                    (4,0.9333000183105469)
                                    (5,0.9387000203132629)
                                    (6,0.9495000243186951)
                                    (7,0.9506999850273132)
                                    (8,0.951200008392334)
                                    (9,0.9580999708175659)
                                    };
            \addplot coordinates {(0,0.896399974822998)
                                    (1,0.9344000220298767)
                                    (2,0.9456999897956848)
                                    (3,0.9470999836921692)
                                    (4,0.9459999799728394)
                                    (5,0.9535999894142151)
                                    (6,0.9579000182151794)
                                    (7,0.9578999972343445)
                                    (8,0.964200029373169)
                                    (9,0.9691000080108643)
                        };
            \end{axis}
            \end{tikzpicture}
        }
    \end{minipage}
    \begin{minipage}{0.37\textwidth}
        \vspace{-0.4cm}
        \captionof{table}{Classification accuracy}
        \vspace{-0.2cm}
        \begin{tabular}{c||cc}\hline
          & \small MNIST & \small F-MNIST \\ \hline\hline
          \small PWGK & 75.31 & 28.13\\
          \small PI  & 94.39  & 31.26 \\
          \small GLR & 93.31 & 56.85 \\ \hline\hline
          \small P-Eucl  & 92.27 & 30.21 \\ 
          \small P-Hybrid  & 93.78 & 38.45 \\ 

          \small P-Poinc  & 95.91 & 72.28 \\ \hline
        \end{tabular}
        \label{table2}
    \end{minipage}
    \caption{Plotting the train loss (\textit{left}) and the validation accuracy (\textit{middle}) over 10 training epochs for the MNIST dataset using two different dimensions for the Poincare ball ($m=3$ and $m=9$).}
    \label{fig:mnist_acc}
\end{figure}

We compare our method against three baselines that encompass all methods for handling persistence diagrams: (1) the Persistence Weighted Gaussian Kernel (PWGK) approach proposed by Kusano et al. \cite{10.5555/3045390.3045602}, (2) the Persistence Images (PI) embedding developed by Adams et al. \cite{JMLR:v18:16-337}, and (3) the Learnable Representation based on Gaussian-like structure elements (GLR) by Hofer et al. \cite{10.5555/3294771.3294927}. We run our simulations for manifold dimensions equal to $m=3$ and $m=9$ and report the best results in Table \ref{table2}. Our method outperforms all other methods, in some cases by a considerable margin. We also study how the manifold dimension affects the train loss and validation accuracy. The results are shown in Fig. \ref{fig:mnist_acc}. Observe that for $m=9$ the train loss decreases more rapidly and the validation accuracy increases more rapidly in the first few epochs. This suggests that a higher manifold dimension may be slightly better for representing persistence diagrams. Additionally, we observed that the Poincare representation tends to generalize better than its Euclidean counterpart. In fact, the validation accuracy of the P-Eucl method started decreasing after the first 2-3 epochs, whereas the P-Poinc method showed a saturation rather than a decrease. This was observed without modifying the dropout rate or any other hyper-parameters and is a strong empirical finding that demonstrates the superiority of Poincare representations.   

% In fact, we attempted to apply the same method using the Lorenz space but obtained poor results. There could be two reasons for that. The first (and most obvious one) is the exponential map, which (in the case of the Lorenz model) consists of the hyperbolic sine and cosine, functions that are poor choices for activations. However, we attempted to replace them with better functions but saw little improvement that did not approach the performance of the Poincare representation. Therefore, the second reason is more likely: The Lorenz model does not assign infinite distances to finite points. This potentially validates our motivation for representing persistence diagrams in a space that has the structure of the Poincare ball and can preserve the relative importance of essential features. Nonetheless, further research is needed in this direction to validate this claim. 
\section{Conclusion}
We presented the first, to the best of our knowledge, method for learning representations of persistence diagrams in the Poincare ball. Our main motivation for introducing such method is that persistence diagrams often contain topological features of infinite persistence (i.e., essential features) the representational capacity of which may be bottlenecked when representing them in Euclidean spaces. This stems from the fact that Euclidean spaces cannot assign infinite distance to finite points. The main benefit of using the Poincare space is that by allowing the representations of essential features to get infinitesimally close to the boundary of the ball their distance to non-essential features approaches infinity, therefore preserving their relative importance. Directions for future work include the learning of filtration and/or scale end-to-end as well as to investigate whether or not there exists some hyperbolic trend in distances appearing in persistence diagrams that justify the improved performance especially on small graphs.

\subsection*{Acknowledgements}
The authors gratefully acknowledge the support by the National Science Foundation under the Career Award CPS/CNS-1453860, the NSF award under Grant Numbers CCF-1837131, MCB-1936775, CNS-1932620, and CMMI 1936624 and the DARPA Young Faculty Award and DARPA Director's Fellowship Award, under Grant Number N66001-17-1-4044, and a Northrop Grumman grant. The views, opinions, and/or findings contained in this article are those of the authors and should not be interpreted as representing the official views or policies, either expressed or implied by the Defense Advanced Research Projects Agency, the Department of Defense or the National Science Foundation.

\bibliographystyle{plain}
\bibliography{iclr2021_conference}

\newpage
\begin{appendices}
\section{Homology and Differential Geometry}
\subsection{Homology}
Homology is a general method for associating a chain of algebraic objects (e.g abelian groups) to other mathematical objects such as topological spaces. The construction begins with a \textit{chain group}, $C_n$, whose elements are the $n$-chains, which for a given complex are formal sums of the $n$-dimensional cells. The \textit{boundary homomorphism}, $\partial_n : C_n\rightarrow C_{n+1}$, maps each $n$-chain to the sum of the $(n-1)$-dimensional faces of its $n$-cell, which is a $(n-1)$-chain. Combining the chain groups and the boundary maps in the sequence, we get the chain complex:
\begin{equation}
    \dots\xrightarrow{\partial_{n+2}} C_{n+1}\xrightarrow{\partial_{n+1}} C_{n}\xrightarrow{\partial_{n}} \dots \xrightarrow{\partial_{2}} C_{1} \xrightarrow{\partial_{1}} C_0 \xrightarrow{\partial_{0}} 0.
\end{equation}
The kernels, $Z_n = ker(\partial_n)$, and the images, $B_n = im(\partial_{n+1})$, of the boundary homomorphisms are called \textit{cycle} and \textit{boundary} groups, respectively. A fundamental property of the boundary homomorphism is that its square is zero, $\partial_n\partial_{n+1}=0$, which implies that, for every $n$, the boundary forms a subgroup of the cycle, i.e., $im(\partial_{n+1})\subseteq ker(\partial_n)$. Therefore, we can create their quotient group
\begin{equation}
    H_n := ker(\partial_n))/im(\partial_{n+1}) = Z_n/B_n,
\end{equation}
which is called \textit{$n$-th homology group} and its elements are called \textit{homology classes}. The rank, $\beta_n = \text{rank}(H_n)$, of this group is known as the \textit{$n$-th Betti number}.

Even though homology can be applied to any topological space, we focus on \textit{simplicial homology}, i.e., homology groups generated by simplicial complexes. Let $K$ be a simplicial complex and $K_n$ its $n$-skeleton. The chain group $C_n(K)$ is the free abelian group whose generators are the $n$-dimensional simplexes of $K$. For a simplex $\sigma = [x_0,\dotso x_n]\in K_n$, we define the boundary homomorphism as
\begin{equation}
    \partial_n(\sigma) = \sum_{i=0}^n[x_0,\dotso,x_{i-1},x_{i+1},\dotso,x_n]
\end{equation}
and linearly extend this to $C_n(K)$, i.e., $\partial_n(\sum\sigma_i) = \sum\partial(\sigma_i)$.

\subsection{Riemmanian Manifolds}
\label{appen:rm}
An \textit{$m$-dimensional manifold} can be seen as a generalization of a 2D surface and is a space that can be locally approximated by $\mathbb{R}^m$. The space in which the manifold is embedded is called \textit{ambient space}. We assume that the ambient space is the standard Euclidean space of dimension equal to the dimension of the manifold, i.e, $\mathbb{R}^m$. For $x\in \mathcal{M}$, the \textit{tangent space} $T_x\mathcal{M}$ of $\mathcal{M}$ at $x$ is the linear vector space spanned by the $m$ linear independent vectors tangent to the curves passing through $x$. A \textit{Riemannian metric} $g=(g_x)_{x\in\mathcal{M}}$ on $\mathcal{M}$ is a collection of inner products $g_x:T_x\mathcal{M}\times T_x\mathcal{M}\rightarrow\mathbb{R}$ varying smoothly across $\mathcal{M}$ and allows us to define geometric notions such as angles and the length of a curve. A \textit{Riemannian manifold} is a smooth manifold equipped with a Riemannian metric $g$. The Riemannian metric $g$ gives rise to the metric norm $\norm{\cdot}_g$ and induces global distances by integrating the length of a shortest path between two points $x,y\in\mathcal{M}$
\begin{equation}
    d(x,y) = \inf_{\gamma}\int_{0}^{1}\sqrt{g_{\gamma(t)}(\dot{\gamma}(t),\dot{\gamma}(t))}dt,
    \label{eq:dist}
\end{equation}
where $\gamma\in\mathcal{M}^\infty([0,1],\mathcal{H})$ is such that $\gamma(0)=x$ and $\gamma(1)=y$. The unique smooth path $\gamma$ of minimal length between two points $x$ and $y$ is called a \textit{geodesic} and the underlying shortest path length is called \textit{geodesic distance}. Geodesics and can be seen as the generalization of straight-lines. For a point $x\in\mathcal{M}$ and a vector $v\in T_{x}\mathcal{M}$, let $\gamma_v$ be the geodesic starting from $x$ with an initial tangent vector equal to $v$, i.e., $\gamma_v(0)=x$ and $\dot{\gamma}_v(0)=v$.  The existence, uniqueness and differentiability of the exponential map are guaranteed by the classical existence and uniqueness theorem of ordinary differential equations and Gauss's Lemma of Riemannian geometry \cite{carmo1992riemannian}. 

\subsection{Poincare Ball}
We define the M\"obius addition for any $x,y$ on the Poincare ball:
\begin{equation}
   x\oplus y = \frac{(1+2\langle x , y \rangle + \norm{y}^2)x + (1-\norm{x}^2)y }{1+2\langle x , y \rangle + \norm{x}^2\norm{y}^2} 
   \label{eq:mobius}
\end{equation}
The M\"obius addition is a manifold preserving operator that allows us to add point belonging in the Poincare ball without leaving the space. Finally, we provide the coordinate chart $\psi:\mathcal{M}\rightarrow\mathbb{R}^m$, i.e., the inverse of the parameterization in Eq. \ref{eq:paramet}, which is defined as follows:
\begin{equation}
    x_i = x_1\prod_{j=2}^{i-1}\sin{(y_j)}\cos{(y_i)}, \text{ for } i=1,\dots,m-1\text{ and }
    x_m = x_1\prod_{j=2}^{m-1}\sin{(y_j)}\sin{(y_m)}.
    \label{eq:chart}
\end{equation}
Note that we have omitted the learnable parameters. The coordinate chart is not explicitly needed for constructing the representation. However, it could be useful as an intermediate layer between our representation and standart deep neural network architecture. Its purpose would be to transfer the representation from the manifold to a Euclidean space which can be fed into a standart DNN without having to define its operations on the manifold. 
\section{Theoretical Results}
\begin{proof}[Proof of Theorem \ref{them:1}]
Let $\mathcal{D},\mathcal{E}$ be two persistence diagrams and $\eta:\mathcal{D}\rightarrow\mathcal{E}$ bijection that achieves the infinum in Eq.\ref{eq:wasser}. Consider the subset $\mathcal{D}_0 = \mathcal{D}\setminus \mathbb{R}_\Delta$, which is essentially the original diagram without the points in the diagonal. Then, we have the following sequence of inequalities
\begin{align}
    &d_\mathcal{B}(\Phi(\mathcal{D},\theta) , \Phi(\mathcal{E},\theta))\\
    &= d_\mathcal{B}\bigg(\exp_{x_0}\Big( \sum_{x\in\mathcal{D}} \log_{x_0}\big(\phi(\rho(x))\big) \Big),  \exp_{x_0}\Big( \sum_{x\in\mathcal{E}} \log_{x_0}\big(\phi(\rho(x))\big) \Big)\bigg) \\
    &\leq K_e \norm{\sum_{x\in\mathcal{D}} \log_{x_0}(\phi ( \rho(x))) -  \sum_{x\in\mathcal{E}} \log_{x_0}(\phi ( \rho(x))))}_g \label{eq:p1}\\
    &\leq K_e \norm{\sum_{x\in\mathcal{D}_0} \log_{x_0}(\phi ( \rho(x))) -  \sum_{x\in\mathcal{D}_0} \log_{x_0}(\phi ( \rho(\eta(x)))}_g\label{eq:p2}\\
    &\leq K_e\sum_{x\in\mathcal{D}_0} \norm{ \log_{x_0}(\phi ( \rho(x))) -   \log_{x_0}(\phi ( \rho(\eta(x))))}_g\label{eq:p3}\\
    &\leq K_e K_l\sum_{x\in\mathcal{D}_0} \norm{ \phi ( \rho(x)) - \phi ( \rho(\eta(x)))}_g
    \leq K_e K_l K_\phi\sum_{x\in\mathcal{D}_0} \norm{ \rho(x) - \rho(\eta(x))}_g\label{eq:p4}\\
    &\leq K_e K_l K_\phi K_\rho\sum_{x\in\mathcal{D}_0} \norm{ x - \eta(x)}_g \leq K_e K_l K_\phi K_\rho w_1^g(\mathcal{D}_0,\mathcal{E})\label{eq:p5}
\end{align}
where $w_1^g$ is the Wasserstein distance with the $q$-norm replaced with the norm induced by the metric tensor of the Poincare ball. Eq. \ref{eq:p1} follows from the smoothness of the exponential map (assuming that it has Lipschitz constant equal to $K_e$). Eq. \ref{eq:p2} follows by substituting the bijection $\eta$ and because, as per requirements of Theorem \ref{them:1}, the auxiliary transformation $\rho$ is zero on the diagonal $\mathbb{R}_\Delta$ and $x_0=0$. Note that the requirements of Theorem \ref{them:1}, combined with Eq. \ref{eq:paramet}, \ref{eq:maps} and \ref{eq:mobius}, imply that the summation over $x\in\mathbb{R}_{\Delta}$ collapses to zero.  Eq. \ref{eq:p3} follows from the triangle inequality. Eq. \ref{eq:p4} follows from the smoothness of the logarithmic map and the parameterization (assuming $K_l$ and $K_\phi$ Lipschitz constants, respectively). Finally, Eq. \ref{eq:p5} follows from the assumed, as per Theorem \ref{them:1}, smoothness of $\rho$ and by using the definition of Wasserstein distance. This concludes the proof.
\end{proof}
\section{Implementation Details}
\textbf{Auxiliary Transformation}: Even though several choices for the auxiliary transformation $\rho : \mathbb{R}_*^2\rightarrow \mathbb{R}^m$ defined in Eq. {\ref{eq:embedding}} are possible, we present the one used in our implementation. Our choice for $\rho$ is based on the exponential structure elements introduced by Hofer et al. {\cite{10.5555/3294771.3294927}}. Formally, for $i=1,2,\dotso m$, we define the following
\begin{equation}
    \rho_i(x_1,x_2) = \begin{cases}
    e^{-(x_1-\mu_{1,i})^2 - (\ln{(x_1-x_2)} - \mu_{2,i})^2} & x_1 \neq x_2 \\
    0 & x_1=x_2.
\end{cases}
\end{equation}
Then, the auxiliary transformation is $\rho : (x_1,x_2) \rightarrow  (\rho_i(x_1,x_2))_{i=1}^m$, where {$(\mu_{1,i},\mu_{2,i})_{i=1}^m$} are the means. In contrast to Hofer et al. {\cite{10.5555/3294771.3294927}}, those means are not learned, but we rather fix them to pre-defined values and keep then constant during training. Additionally, we can easily prove that the above-defined {$\rho$} satisfies the stability conditions given by Theorem {\ref{them:1}}, i.e., zero on the diagonal and Lipschitz continuity. Finally, under the assumption of all means being unique, it is injective on {$\mathbb{R}_*^2$} and, therefore, introduces no information loss.

\textbf{Network Architecture}:   We implemented all algorithms in TensorFlow 2.2 using the TDA-Toolkit\footnote{\url{https://github.com/giotto-ai/giotto-tda}} and the Scikit-TDA\footnote{\url{https://github.com/scikit-tda/scikit-tda}} for extracting persistence diagrams and run all experiments on the Google Cloud AI Platform. To show the versatility of our approach, we used the same network architecture across all of our simulations and our proposed representation acts as an input. In more detail, for each individual filtration that we extract from the data (images or graphs), we create a different input layer. The input to this layer is the persistence diagrams of all homology dimension given by the corresponding filtration. Given the nature of the data (graphs and images), the homology dimensions with non-trivial persistence diagrams are only the first two (i.e., $H_0$ and $H_1$). Our input layer processes the persistence diagrams of each homology class independently and outputs the representations (i.e., Eq. \ref{eq:rep}) for each one of them. Following that, we concatenate and flatten the previous outputs and fed the resulting vector to a batch normalization layer. Then, we use two dense layers of 256 and 128 neurons, respectively, with a Rectified Linear (ReLu) activation function. This is followed by a dropout layer to prevent overfitting and a final dense layer as an output. We consider two different implementations of the the above architecture. The first one is based on the hyperbolic neural networks introduced by Ganea et al. in \cite{NIPS2018_7780}. It re-defines key neural network operations in the Poincare ball and, therefore, allows us to directly utilize the output of our representation in the neural network. The second implementation uses the coordinate chart given by Eq. \ref{eq:chart} to project the representation back to a Euclidean space and feed it to a standardized implementation of the aforementioned layers. This method is more convenient in practice as it maintains compatibility with existing backbone networks trained in Euclidean geometry.    

\textbf{Hyperparameters and Training}: To train the neural network, we use the Adam optimizer ($\beta_1 = 0.9, \beta_2=0.999$) with an initial learning rate of 0.001 and batch size equal to 64. We use a random uniform initializer in the interval $[-0.05,0.05]$ for all learnable variables. For all graph datasets we trained the network for 100 epochs and halved the learning rate every 25 epochs. For the MNIST and Fashion-MNIST datasets we used 10 and 20 epochs, respectively, and no learning rate scheduler. We tune the dropout rate manually by starting from really low values and monitoring the validation set accuracy. In general, we noticed that the network did not tend to overfit, therefore, we kept the rate at low values (0-0.2) for all experiments.  
% \begin{equation}
% \begin{split}
%     \psi_i = \theta_1x_1\prod_{j=2}^{i-1}\sin{(x_j+\theta_j)}\cos{(x_i+\theta_i)}, \text{ for } i=1,\dots,m-1\\
%     \psi_m = \theta_1x_1\prod_{j=2}^{m-1}\sin{(x_j+\theta_j)}\sin{(x_m+\theta_m)}.
%     \label{eq:param}
% \end{split}
% \end{equation}
\end{appendices}

\end{document}